\newtheorem{theorem}{Theorem}
\newtheorem{proposition}{Proposition}
\newcommand{\bb}{\bm{b}}
\newcommand{\be}{\bm{e}}
\newcommand{\bp}{\bm{p}}
\newcommand{\bw}{\bm{w}}
\newcommand{\bx}{\bm{x}}
\newcommand{\bA}{\bm{A}}
\newcommand{\bM}{\bm{M}}
\newcommand{\sety}{\hat{Y}}
\newcommand{\setv}{\hat{V}}
\newcommand{\calM}{\mathcal{M}}
\newcommand{\calQ}{\mathcal{Q}}
\newcommand{\calT}{\mathcal{T}}
\newcommand{\calX}{\mathcal{X}}
\newcommand{\calY}{\mathcal{Y}}
\newcommand{\powerset}{\mathcal{P}(\mathcal{Y})}
\newcommand{\bopset}{\sety^{*}}
\newcommand{\calET}{\mathcal{E}_{\mathcal{T}}}
\newcommand{\calGT}{\mathcal{G}_{\mathcal{T}}}
\newcommand{\calVT}{\mathcal{V}_{\mathcal{T}}}
\newcommand{\RT}{R_{\calT}}
\newcommand{\calRT}{\mathcal{R}_{\calT}}
\newcommand{\calST}{\mathcal{S}_{\calT}}
\DeclareMathOperator*{\argmax}{\arg\max}
\title{Set-valued prediction in hierarchical classification with constrained representation complexity}
\author[1]{\href{mailto:<thomasf.mortier@ugent.be>?Subject=UAI 2022 paper}{Thomas~Mortier}{}}
\author[2]{Eyke~H\"ullermeier}
\author[3,4]{Krzysztof~Dembczy\'nski}
\author[1]{Willem~Waegeman}
\affil[1]{%
    Dept. of Data Analysis and Mathematical Modelling\\
    Ghent University\\
    Coupure links 653, Ghent, Belgium
}
\affil[2]{%
    Institute of Informatics\\
    LMU Munich\\
    Akademiestr. 7, Munich, Germany
}
\affil[3]{%
    Institute of Computing Science\\
    Pozna\'n University of Technology\\
    Piotrowo 2, Pozna\'n, Poland
}
\affil[4]{%
    Yahoo! Research\\
    770 Broadway, New York, USA
}
\begin{document}

\maketitle

\begin{abstract}
Set-valued prediction is a well-known concept in multi-class classification. When a classifier is uncertain about the class label for a test instance, it can predict a set of classes instead of a single class. In this paper, we focus on hierarchical multi-class classification problems, where valid sets (typically) correspond to internal nodes of the hierarchy. We argue that this is a very strong restriction, and we propose a relaxation by introducing the notion of representation complexity for a predicted set. In combination with probabilistic classifiers, this leads to a challenging inference problem for which specific combinatorial optimization algorithms are needed. We propose three methods and evaluate them on benchmark datasets: a na\"ive approach that is based on matrix-vector multiplication, a reformulation as a knapsack problem with conflict graph, and a recursive tree search method. Experimental results demonstrate that the last method is computationally more efficient than the other two approaches, due to a hierarchical factorization of the conditional class distribution. 
\end{abstract}

\section{Introduction}\label{sec:introduction}

In multi-class classification problems with a lot of classes, there are often situations where a classifier is uncertain about the class label for a given instance, e.g., because of class ambiguity. 
Set-valued predictions form a natural way of dealing with this uncertainty, by predicting a set of classes instead of a single class.  For instance, in medical diagnosis, when there is uncertainty related to the true disease of a patient, a set-valued classifier will return a set of candidate diseases. This set can then be of great help for a medical doctor, as only the remaining candidate diseases need further investigation. 


In the machine learning literature, set-valued prediction has been studied under different frameworks. A simple approach consists of top-$k$ prediction, i.e., returning a set with the $k$ classes that have the highest probabilities or scores~\citep{Lapin16TopK,Chzhen21SetvaluedC}. Another popular approach is conformal prediction~\citep{Shafer2008}, which produces sets that contain the true class with high probability. A third framework is rooted in Bayesian decision theory and optimizes a utility function that trades off two important criteria for set-valued predictions, namely correctness and precision~\citep{Delcoz2009LearningNC,Corani2008NCC,Corani2009LNCC,Zaffalon2012EvaluatingCC,Yang2017b,Mortier21EfficientSVP}. Like in conformal prediction, the predicted set should be correct in the sense of covering the true class, but at the same time, the set should be precise and not contain too many options. 

Set-valued prediction has also been considered in a hierarchical classification setting, where similarity among classes is encoded by means of a predefined class hierarchy provided by domain experts. For instance, in medical diagnosis, it is natural to group different types of cancer as one branch of the disease classification hierarchy. In hierarchical classification, set-valued predictions are often restricted to specific subsets of the set of classes, namely those that correspond to nodes of the hierarchy and, therefore, have a clear interpretation and are deemed semantically meaningful~\citep{Freitas07HC,Bi15BOPHMLC,Rangwala17LHC,Yang17CautiousHMC}. Moreover, restricting candidate sets to hierarchy nodes will also reduce the computational complexity of finding the best prediction for a given instance. 
On the other side, a restriction of that kind may negatively impact predictive performance. That's why a few authors allow any subset of classes as a prediction in hierarchical classification~\citep{Oh17TopKHC,Mortier21EfficientSVP}. Then, however, predictions might be semantically questionable and, moreover, difficult to communicate -- in the general case, a prediction would be an enumeration of (possibly many) leaf nodes, ignoring the hierarchy altogether.


In this paper, we propose a novel set-valued prediction framework for hierarchical classification that makes a compromise between the two extremes. Compared to approaches that predict a single node of the hierarchy, we will be less restrictive in the type of sets that can be returned, but we will be more restrictive than methods that return any subset of classes. More specifically, we allow the user to restrict the so-called \emph{representation complexity} of a predicted set (see Section 2 for a formal definition). The main idea is to return a restricted number of internal nodes of the hierarchy as candidate sets instead of a single node.
For example, imagine that classes correspond to spatial regions on the earth. 
In this case, a natural hierarchy is the form of 
\begin{center}
    continent $\rightarrow$ country $\rightarrow$ state $\rightarrow$ district $\rightarrow$ $\cdots$.
\end{center}
Obviously, one is interested in a prediction that makes it probable to find the right location. In ``flat'' top-$k$ prediction, one may end up with many small regions (leaf nodes of the hierarchy) scattered around the globe, which might not be desirable (e.g., checking those regions may cause a lot of effort). On the contrary, hierarchical predictions such as ``it's in France or in the Netherlands'' might be more useful and require less effort. 

Section~\ref{sec:framework} presents a decision-theoretic framework where the representation complexity of a set is a user-defined parameter, which results in a challenging optimization problem. In Section~\ref{sec:alg}, we present three different approaches to solve this inference problem: a na\"ive algorithm that has a high computational complexity, a reformulation as a knapsack problem with conflict graph, and a tailored recursive tree search algorithm that adopts a hierarchical factorization of the conditional class distribution. In Section~\ref{sec:relatedwork}, we discuss related work, and in Section~\ref{sec:experiments}, we present experimental results on five challenging hierarchical classification datasets.    


\begin{figure}[t]
\centering
\begin{tikzpicture}[sibling distance=10em,
	every node/.style = {align=center,
	  top color=white, bottom color=white!20},
	  line/.style={draw, -latex'},
	  edge from parent/.style={draw,-latex'},
	  level 1/.style={sibling distance=40mm}, 
	  level 2/.style={sibling distance=20mm},
	  level 3/.style={text width=1cm,font=\tiny}]
	\node {$v_1=\{1,2,3,4\}$}
	  child { node (ch1){$v_2=\{1,2\}$} 
	      child { node {$v_4=\{1\}$} }
	      child { node {$v_5=\{2\}$} } }
	  child { node (ch2){$v_3=\{3,4\}$} 
	      child { node {$v_6=\{3\}$} }
	      child { node {$v_7=\{4\}$} } };
\end{tikzpicture}
\caption{Example hierarchy for $\calY=\{1,2,3,4\}$. The class space is represented by the root of the tree structure $\calT$, given by $v_{1}$. For $\sety=\{3,4\}$ we find $\calST(\sety)=\left\{\{v_6, v_7\}, \{v_3\}\right\}$ and therefore $\RT(\sety)=1$.}
\label{fig:rc:ex}
\end{figure}

\section{Formal Problem Formulation}\label{sec:framework}

In a standard multi-class classification setting we assume that training and test data are i.i.d.\ according to an unknown distribution $P(\bx,y)$ on $\calX\times\calY$, with $\calX$ some instance space (e.g., images, documents, etc.) and $\calY=\{c_1,\ldots,c_K\}$ a class space consisting of $K$ classes. In a multi-class classification setting, we estimate the conditional class probabilities $P(\cdot \,|\,\bx)$ over $\calY$, with properties $\forall c \in \calY: 0 \leq P(c\,|\,\bx) \leq 1 \,, \sum_{c \in \calY} P(c\,|\, \bx) = 1 \,.$ This distribution can be estimated using a wide range of well-known probabilistic methods, such as logistic regression, linear discriminant analysis, gradient boosting trees or neural networks with a softmax output layer. At prediction time, we will predict sets $\sety$ that are subsets of $\calY$. The probability mass of such a set will be computed as $P(\sety \,|\, \bm{x}) = \sum_{c \in \sety} P(c \,|\, \bm{x})$.

However, in this paper we will consider a hierarchical multi-class classification setting. Hence, we assume that a domain expert has defined a hierarchy over the class space, in the form of a tree structure $\calT$ that contains in general $M$ nodes. $\calVT=\{v_1,\ldots,v_M\}$ will denote the set of nodes and every node identifies a set of classes. As special cases, the root $v_1$ represents the class space $\calY$, and the leaves represent individual classes -- see Fig.~\ref{fig:rc:ex} for a simple example. In hierarchical classification, one typically makes the strong restriction $\sety \in \calVT$ for predicted sets -- see e.g.,~\citet{Bi15BOPHMLC}. The probability mass $P(v\,|\,\bx)$ of such a set can be computed using the chain rule of probability:
\begin{equation}
\label{eq:hierfac}
P(v\,|\,\bx)  = \prod_{v' \in \mathrm{Path}(v)} P(v' \,|\, \mathrm{Parent}(v'), \bx) \,,
\end{equation}
where $\mathrm{Path}(v)$ is a set of nodes on the path connecting the node $v$ and the root of the tree structure. $\mathrm{Parent}(v)$ gives the parent of node $v$, and for the root node $v_1$ we have $P(v_1 \,|\, \mathrm{Parent}(v_1), \bx) = 1$. In each node of the tree, one can train any multi-class probabilistic classifier. Classical models of that kind include nested dichotomies~\citep{Fox_1997,Frank2004NestedD,melnikov2018}, 
conditional probability estimation trees~\citep{Beygelzimer2009CondPT} and 
probabilistic classifier trees~\citep{Dembczynski2016ConsistencyOP}. In neural networks with 
a hierarchical softmax output layer, all nodes are trained simultaneously~\citep{Morin2005HierS}. 

In this work, we do not focus on the training algorithms. Instead we assume that a probabilistic model has been estimated, either with classical models or using a hierarchical factorization as in Eqn.~(\ref{eq:hierfac}), and we present a decision-theoretic framework with an inference procedure at prediction time. In this inference procedure, we restrict the representation complexity $\RT(\sety)$, which will be formally defined as the minimal number of tree nodes needed to represent the set $\sety$. Let $\calST(\sety)$ denote the set of all disjoint combinations of tree nodes that represent $\sety$:
\begin{equation*}
	\label{eq:s}
	\calST(\sety) = \left\{\setv \subset \calVT: \bigcup_{v_{i}\in \setv} v_{i}=\sety \land \bigcap_{v_{i}\in\setv} v_{i}=\emptyset \right\}\, .
\end{equation*}
Then, we define the representation complexity of the prediction $\sety$ as
\begin{align}
    \label{eq:c}
	\RT(\sety)=\min_{\setv\in \calST(\sety)} |\setv|\,,
\end{align}
with $|\setv|$ the cardinality of $\setv$. As an example, let us consider again the four-class hierarchy that was shown in Fig.~\ref{fig:rc:ex}. For example, with $\sety=\{c_1,c_3,c_4\}$ we find $\calST(\sety)=\left\{\{v_4,v_6,v_7\}, \{v_4,v_3 \}\right\}$ and therefore $\RT(\sety)=2$. 

Furthermore, if we denote the $r$-th representation complexity class by $$\calRT^{(r)}=\left\{\sety\in \powerset: \RT(\sety)=r\right\} \,,$$ then it immediately folllows that $\calRT^{(1)}=\calVT$. In the example of Fig.~\ref{fig:rc:ex}, the other representation complexity classes are given by:
\begin{align*}
    \calRT^{(2)}&=\{\{1,3\},\{1,4\},\{2,3\},\{2,4\},\{1,3,4\},\{2,3,4\},\\
    & \{1,2,3\},\{1,2,4\}\} \,, \qquad \calRT^{(3)}=\{\emptyset\} \,.
\end{align*}
The example suggests that the first $K-1$ representation complexity classes form a partition of $\powerset\setminus\{\emptyset\}$, with $\powerset$ the powerset of $\calY$. The following theorem, whose proof is found in App.~\ref{sec:app:proofth}, indicates that this observation holds more generally.
\begin{theorem}
\label{th:rc:partition}
$\{\calRT^{(1)},\ldots,\calRT^{(K-1)}\}$ forms a partition of $\powerset \setminus\{\emptyset\}$, for any class space $\calY$ and hierarchy $\calT$.
\end{theorem}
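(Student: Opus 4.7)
The statement asserts two things: that the blocks $\calRT^{(r)}$ are pairwise disjoint, and that their union over $r=1,\ldots,K-1$ equals $\powerset\setminus\{\emptyset\}$. Disjointness is immediate once I confirm that $\RT(\sety)$ is a well-defined non-negative integer for every non-empty $\sety\subseteq\calY$: such a $\sety$ is then tagged with a unique complexity value and therefore belongs to exactly one block. To establish well-definedness, I would first observe that $\calST(\sety)$ is non-empty for every $\sety\subseteq\calY$, because the leaves of $\calT$ are in bijection with the classes in $\calY$, so the collection of singletons $\{\{c\}:c\in\sety\}$ is a disjoint cover of $\sety$. Hence $\RT(\sety)$ is the minimum of a non-empty finite set of non-negative integers, and is well-defined.

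The substantive content is the double bound $1\leq \RT(\sety)\leq K-1$ for every non-empty $\sety\subseteq\calY$. The lower bound is immediate: a non-empty set cannot be expressed as the union of zero nodes, so any $\setv\in\calST(\sety)$ must contain at least one node. For the upper bound I would split into two cases. If $\sety=\calY$, then $\sety$ coincides with the root $v_1$ of $\calT$, so $\{v_1\}\in\calST(\sety)$ and therefore $\RT(\sety)=1\leq K-1$. If instead $\sety\subsetneq\calY$, then $|\sety|\leq K-1$, and the singleton-leaf cover $\{\{c\}:c\in\sety\}$ already witnesses $\RT(\sety)\leq|\sety|\leq K-1$.

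The only place where the argument could slip is the boundary case $\sety=\calY$: the singleton-leaf construction alone yields merely $\RT(\sety)\leq K$, one unit short of what the theorem demands, so the proof critically relies on sharpening this via the observation that the whole class space is itself a node of $\calT$. Assembling disjointness with the two bounds shows that every non-empty subset of $\calY$ lies in exactly one $\calRT^{(r)}$ with $r\in\{1,\ldots,K-1\}$, which is the claimed partition.
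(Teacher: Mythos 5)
Your proof is correct and follows essentially the same route as the paper's appendix: disjointness comes from the well-definedness of the minimum in $\RT(\sety)$, and coverage comes from the observation that the leaf singletons $\{\{c\}:c\in\sety\}$ always give a disjoint representation, so $\calST(\sety)\neq\emptyset$ and $1\leq\RT(\sety)\leq K-1$. If anything, your explicit case split for $\sety=\calY$ (using the root node to get $\RT(\calY)=1$) is slightly more careful than the paper's argument, which bounds $\RT(\sety)$ only by $|\sety|$ and leaves the boundary case $|\sety|=K$ implicit.
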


We are now ready to introduce the inference problem that forms the central idea of this paper. At prediction time, we aim to find the set with highest probability mass, while restricting the maximal representation complexity by $r$ and the maximal set size by $k$, with $r$ and $k$ user-defined parameters. As a result, we aim to solve the following constrained maximization problem: 
\begin{equation}
\label{eq:bayesoptimal}
    \bopset(\bx) = \argmax_{\sety \subseteq \calY} \,P(\sety\,|\,\bx),
\end{equation}
\begin{equation*}		
\text{subject to} \qquad  |\sety|\leq k\,, \quad  
		 \RT(\sety)\leq r \,,
\end{equation*}
where $|\sety|$ denotes the cardinality of the predicted set $\sety$.  Remark that in classical hierarchical classification settings, one would have the very tight restriction $\RT(\sety) = 1$, whereas in flat classification $\RT(\sety) \leq K$ typically applies. 

\section{Algorithms}\label{sec:alg}
In this section we will discuss three algorithms that can be used to solve problem (\ref{eq:bayesoptimal}), which is a very challenging combinatorial optimization problem, because the number of feasible sets grows exponentially with $r$. To this end, we will assume that we have access to an estimate of the conditional class distribution $P( \cdot \,|\,\bx)$. For the first two algorithms that we present, such an estimate can be obtained using any probabilistic classifier. For the third algorithm, a specific hierarchical factorization as in Eqn.~(\ref{eq:hierfac}) is needed. Owing to this factorization, we obtain substantial improvements in memory and runtime complexity.


\subsection{Matrix-vector Multiplication}\label{sec:alg:mvm}
A na\"ive (but inefficient) algorithm performs an exhaustive search over all feasible solutions of problem (\ref{eq:bayesoptimal}). By relying on fast matrix-vector multiplication and parallelization routines, this can still be done within reasonable time for small $r$.
Assuming that $r$ and $k$ remain fixed, let us denote the set of feasible solutions by $$\calM_{r,k}=\left\{\sety\in\powerset:\RT(\sety)\leq r \land |\sety|\leq k\right\} \,.$$ Given this set, together with some arbitrary ordering, let us further consider a matrix $\bM\in\{0,1\}^{|\calM|\times K}$ where rows represent the elements of $\calM_{r,k}$ and columns the elements $\calY$. In this matrix, element $M_{i,j}=1$ if the $i$-th set in $\calM_{r,k}$ contains class $c_{j}$. For a given $\bx$, let us denote by $\bp$ the vector containing conditional class probabilities, i.e., $p_{j}=P(c_{j}\,|\,\bx)$. The solution to (\ref{eq:bayesoptimal}) is then simply found by calculating the vector $\bM\bp$ and searching for the highest element in this vector, as shown in Alg.~\ref{alg:svbop-f}. As a consequence of Theorem~\ref{th:rc:partition}, it is clear that the runtime and memory complexity for this na\"ive algorithm rapidly increases as a function of $r$. The complexity is of the order $O(2^{K})$ in the worst case, when $r$ is close to $K$.

\begin{algorithm}[h!]
\begin{small}
    \caption{MVM -- \textbf{input:} $\bx$, $\calM_{r,k}$, $\bM$, $P$, $\calY$}
\begin{algorithmic}[1] 
\State $\bopset, p_{\bopset} \leftarrow \emptyset, 0$
\State $\bp \leftarrow $ conditional class probabilities, i.e., $p_{j}=P(c_{j}\,|\,\bx)$
\State $\bp_{\calM} \leftarrow \bM\bp$ with $p_{\calM,\sety}=P(\sety\,|\,\bx)$ for $\sety\in\calM_{r,k}$
\For{$\sety \in \calM_{r,k}$}
    \If{$p_{\calM,\sety}\geq p_{\bopset}$}
        \State $\bopset, p_{\bopset} \leftarrow \sety, p_{\calM,\sety}$
    \EndIf
\EndFor
\State \textbf{return} $\bopset, p_{\bopset}$
\end{algorithmic}
\label{alg:svbop-f}
\end{small}
\end{algorithm}

\subsection{Knapsack with conflict graph}\label{sec:alg:kcg}

A second algorithm consists of reducing (\ref{eq:bayesoptimal}) to an instance of the knapsack problem with conflict graph (KCG)~\citep{Pferschy09KnapsackWithConflictGraphs}. In our case, items are represented by tree nodes, where every tree node is either included in the knapsack or not. The goal is then to find the set of nodes that maximize the total probability mass, while taking into account the constraints on the representation complexity and the set size. In addition, we also have constraints w.r.t.\ incompatibility of certain pairs of nodes. More precisely, for any pair of tree nodes, where one node of the pair is an ancestor of the other node, at most one node can be included in the knapsack. This can be represented by means of an undirected conflict graph, where every vertex corresponds to a tree node and every edge denotes a conflict relation. More formally, to translate our problem to an instance of KCG, let us first denote by $\calGT=(\calVT, \calET)$ the conflict graph with $$\calET=\left\{(v_{i},v_{j}) : (v_{i},v_{j}) \in \calVT \times \calVT \land v_{i}\cap v_{j}\neq\emptyset\right\} \,.$$ For every edge $(v_{i},v_{j})\in \calET$, we have a corresponding vector representation given by $\be\in\{0,1\}^{|\calVT|}$ with $e_{i}=e_{j}=1$ and $\sum_{j=1}^{|\calVT|}e_{j}=2$. Furthermore, let us denote by $\bm{w}$ the $|\calVT|$-dimensional vector that encodes for every tree node the size of the corresponding set of classes, i.e., $w_{j}=|v_{j}|$. For a given $\bx$, let $\bp$ be the $|\calVT|$-dimensional vector that contains the probability mass $P(v_{j}\,|\,\bx)$ of every tree node $v_j$. Let $\bm{z}\in\{0,1\}^{|\calVT|}$ be the vector that encodes feasible solutions, i.e., an entry in this vector is 1 when the corresponding node is contained in the knapsack, and 0 otherwise. Given the above notations, the solution to (\ref{eq:bayesoptimal}) is then found by solving the following integer linear program (ILP):
\begin{equation}
    \label{eq:bayesooptimal:kcg}
    \text{max}_{\bm{z}} \quad\bp^{\intercal}\bm{z},\quad \text{subject to}\, \quad \bA_{\calT}\bm{z} \leq \bb_{r,k}\,,
\end{equation}
with 
\begin{align*}
    \bA_{\calT} &= 
\begin{bmatrix}
    \mathbf{1} & \bw & \be_{1} & \hdots & \be_{|\calET|}
\end{bmatrix}^{\intercal},\\
    \bb_{r,k} &= \begin{bmatrix}
r & k & 1 & \hdots & 1
\end{bmatrix}^{\intercal}.
\end{align*}
Alg.~\ref{alg:kcg} describes the full procedure to obtain the Bayes-optimal solution, using a generic ILP solver. It will be faster than Alg.~\ref{alg:svbop-f}, but long runtimes can still be expected, because KCG problems are known as strongly NP-hard problems~\citep{Pferschy09KnapsackWithConflictGraphs}. In the related work section, we give an overview of algorithms that have been developed for this group of problems. 
\begin{algorithm}[h!]
\begin{small}
    \caption{KCG -- \textbf{input:} $\bx$, $\bA_{\calT}$, $\bb_{r,k}$, ILP, $P$, $\calVT$}
\begin{algorithmic}[1] 
\State $\bopset, p_{\bopset} \leftarrow \emptyset, 0$
    \State Compute $\bp$ using an estimated probabilistic model $P(\cdot  \,|\,\bx)$
    \State $\bopset, p_{\bopset} \leftarrow \text{ILP}(\bp,\bA_{\calT},\bb_{r,k})$\Comment{{\scriptsize Solve with a given ILP solver}}
\State \textbf{return} $\bopset, p_{\bopset}$
\end{algorithmic}
\label{alg:kcg}
\end{small}
\end{algorithm}

\subsection{Recursive tree search}\label{sec:alg:rts}
The last algorithm is a tailor-made recursive tree search, inspired by $A^*$-search for probabilistic classifier trees~\citep{Demb2012b,Dembczynski2016ConsistencyOP,Mena2016,Mortier21EfficientSVP}. Unlike the previous two approaches, which are both usable with flat and hierarchical classifiers, this method assumes that the conditional class distribution can be factorized as in Eqn.~(\ref{eq:hierfac}). This restriction will result in significant speed-ups, because the search for the Bayes-optimal solution of problem (\ref{eq:bayesoptimal}) can then be performed in a top-down manner. 

At its core, the algorithm uses a priority queue for storing visited nodes in decreasing order of probability mass. First, the queue is initialized with the root node in Alg.~\ref{alg:svbop-hf}. Next, in the main loop of Alg.~\ref{alg:svbop-hf-find}, for each iteration, the next node is popped from the priority queue in order of decreasing probability mass. For each node that is popped, the current solution is updated and compared to the best solution seen so far. Subsequently, Alg.~\ref{alg:svbop-hf-find} is recursively called with a copy of the priority queue. In this way, solutions are recursively explored in a depth-first search manner until the maximum level (i.e., representation complexity $r$) is reached. To show that Alg.~\ref{alg:svbop-hf-find} finds the Bayes-optimal solution to problem (\ref{eq:bayesoptimal}) in an efficient way, we first prove that the equality $\sety\cap v=\emptyset$ must hold for any feasible set $\sety\cup v$  considered in line 3 (Prop.~\ref{prop:svbop-hf:validsol}). Subsequently, we show in Theorem~\ref{th:svbop-hf:bop} that from those feasible sets, only a limited number needs to be considered to find the Bayes-optimal solution. 
\begin{proposition}
\label{prop:svbop-hf:validsol}
For any solution $\sety\in\powerset$ and corresponding priority queue $\calQ_{\sety}$ in Alg.~\ref{alg:svbop-hf-find}, there are no nodes $v$ in $\calQ_{\sety}$ for which $\sety\cap v\neq\emptyset$ in line 3. This holds for any $\bx,r,k,P$ and $\calVT$.
\end{proposition}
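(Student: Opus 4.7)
The plan is to prove, by induction on the recursion depth of Alg.~\ref{alg:svbop-hf-find}, the stronger invariant: (a) every $v\in\calQ_{\sety}$ satisfies $\sety\cap v=\emptyset$, and (b) $\calQ_{\sety}$ is an antichain in $\calT$ (no two of its nodes lie in an ancestor--descendant relation). Part (a) is exactly the claim of the proposition, while carrying (b) along is what allows the induction to close.

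For the base case, the initial call spawned from Alg.~\ref{alg:svbop-hf} has $\sety=\emptyset$ and $\calQ_{\emptyset}=\{v_{1}\}$, so (a) holds vacuously and (b) is trivial for a singleton. For the inductive step, suppose the invariant holds for a call with $\sety$ and $\calQ_{\sety}$, and let $v$ be the node popped at line~3; by (a) we have $\sety\cap v=\emptyset$. The algorithm then forms $\sety'=\sety\cup v$ and invokes itself recursively with a queue $\calQ_{\sety'}$ obtained from $\calQ_{\sety}$ by the standard top-down expansion, i.e.\ by removing $v$ and inserting the frontier nodes that come into play at the next level. A key tree-theoretic fact I will use is that, because each internal node of $\calT$ is partitioned by its children, two tree nodes $v_{i},v_{j}\in\calVT$ intersect iff one is an ancestor of the other; equivalently, any antichain in $\calT$ consists of pairwise-disjoint subsets of $\calY$.

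Given this fact, I would check both parts of the invariant on $\calQ_{\sety'}$ by case analysis on $v'\in\calQ_{\sety'}$. If $v'$ was already in $\calQ_{\sety}$, then $v'\cap\sety=\emptyset$ by inductive hypothesis~(a) and $v'\cap v=\emptyset$ by antichain hypothesis~(b), so $v'\cap\sety'=\emptyset$. If $v'$ was newly inserted as a child of some expanded node, then $v'$ is a subset of a former queue element and therefore disjoint from $\sety$; disjointness from $v$ follows because $v$ and the expanded node were incomparable in $\calT$. The antichain condition (b) is preserved because replacing a node by its children never creates ancestor--descendant pairs among the remaining queue elements.

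The step I expect to be the main obstacle is verifying that the queue $\calQ_{\sety'}$ passed to the recursive branch which commits $v$ to the solution never receives a descendant of $v$: such a descendant would be a subset of $v\subseteq\sety'$ and would immediately break (a). Settling this requires a careful reading of the pseudocode of Alg.~\ref{alg:svbop-hf-find} to see that children of $v$ are pushed only into the queue used by the ``sibling'' branches of the search (those that keep $v$ out of the solution and continue to refine at the current level), and not into the queue handed down to the branch that has just added $v$. Once this bookkeeping step is confirmed, the induction closes uniformly in $\bx$, $r$, $k$, $P$ and $\calVT$, yielding Prop.~\ref{prop:svbop-hf:validsol}.
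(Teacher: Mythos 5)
Your plan is sound and, modulo the step you flag, it is essentially the paper's own argument: an induction over the recursive calls of Alg.~\ref{alg:svbop-hf-find} resting on two facts, namely (i) the queue handed to a recursive call contains no descendant of the node just committed, and (ii) it contains no node intersecting the previously committed ones (in your formulation, because the queue is an antichain and nodes of $\calT$ intersect only when comparable). The paper runs the induction on $\RT(\sety)$ and closes it by contradiction; your strengthened invariant --- the queue is a pairwise-disjoint frontier, disjoint from $\sety$ --- is a clean repackaging of the same two facts, and the antichain part does hold, since a node enters the queue only when its parent is popped and removed and nodes are never re-inserted.

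The bookkeeping step you defer is exactly the crux, and it is settled directly by the order of operations in Alg.~\ref{alg:svbop-hf-find}: the copy $\calQ_{\sety'}\leftarrow\calQ_{\sety}$ in line 9 is taken after $v$ has been popped (line 2) but \emph{before} the loop in lines 13--16, and that loop inserts $\mathrm{Children}(v)$ into $\calQ_{\sety}$ itself, never into the copy $\calQ_{\sety'}$ that was passed to the recursive call in line 10. Hence the branch that commits $v$ never sees a descendant of $v$; this is the same observation the paper's proof uses (``descendants are only added after the recursive call in line 10 has finished''). One correction to your write-up: $\calQ_{\sety'}$ is \emph{not} obtained ``by removing $v$ and inserting the frontier nodes that come into play at the next level'' --- it is simply the copy of $\calQ_{\sety}$ with $v$ removed and nothing inserted; had the children of $v$ been inserted into it, your invariant (a) would fail, which is precisely the tension you noticed. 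With that reading of the pseudocode confirmed, your induction closes and the proposal is correct.
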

\begin{proof}
    The proposition holds naturally for the first call of Alg.~\ref{alg:svbop-hf-find}, i.e., when $\sety=\emptyset$. Let us now consider all $\sety$ for which $\RT(\sety)=1$, or in other words all $\sety\in\calVT$. Furthermore, assume that there exists a $v\in\calQ_{\sety}$ such that $\sety\cap v\neq\emptyset$, then $v$ must be one of the descendants of $\sety$ or vice versa. The first case is not possible, since descendants of $\sety$ are only added to $\calQ_{\sety}$ after the recursive call in line 10 has finished. Nor is the second, since $v$ must be already popped from the priority queue in that case. Therefore, the above proposition must hold for any $\sety$ with $\RT(\sety)=1$. Let us now assume that the proposition holds for any $\sety$ for which $\RT(\sety)=r'<r$. Assume that for all $\sety$ with $\RT(\sety)=r'+1$ there exists a $v$ in $\calQ_{\sety}$ such that $\sety\cap v \neq \emptyset$. Since there exists a $v''$ in $\calVT$ such that $\sety=\sety''\cup v''$ with $\calQ_{\sety"}=\calQ_{\sety}\cup v"$, then either  $v''\cap v\neq \emptyset$ or $\sety''\cap v\neq \emptyset$. Similarly as in the beginning of the proof, the first case is not possible since descendants of $v''$ are only added to the priority queue after the recursive call in line 10. For the second case, given that the proposition holds for any $\sety$ with $\RT(\sety)=r'<r$, we know that there is no $v \in \calQ_{\sety}$ with $\sety'' \cap v \neq \emptyset$. This contradiction completes the proof by induction. 
\end{proof}
\begin{theorem}
\label{th:svbop-hf:bop}
For any $\bx,r,k,P$ and $\calVT$, Alg.~\ref{alg:svbop-hf} will find the Bayes-optimal solution of problem (\ref{eq:bayesoptimal}).
\end{theorem}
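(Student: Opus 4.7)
The plan is to combine Proposition~\ref{prop:svbop-hf:validsol} with an induction on the representation complexity $r$. Proposition~\ref{prop:svbop-hf:validsol} already guarantees that every candidate $\sety \cup v$ examined in line~3 is a valid feasible set, i.e., the nodes whose union forms the candidate are pairwise disjoint; combined with the fact that the recursion is cut off at depth $r$ and nodes respecting $|\sety|\le k$ are enforced, every candidate lies in $\calM_{r,k}$. Hence it suffices to prove that the algorithm \emph{enumerates enough} candidates to guarantee that the Bayes-optimal set is among them.

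For the base case $r=1$, I would argue that the priority queue, initialized with the root and enlarged by the children of each popped node, eventually pops every $v\in\calVT$ satisfying $|v|\le k$ (those violating the size constraint can be pruned). This follows from the hierarchical factorization~(\ref{eq:hierfac}), which ensures that each tree node has a well-defined probability mass computable from its ancestors, so the queue-based traversal is a complete (albeit priority-guided) enumeration of $\calVT = \calRT^{(1)}$. Since line~3 compares the mass of every popped singleton candidate to the incumbent, the maximum over $\calRT^{(1)}\cap\calM_{1,k}$ is returned.

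For the inductive step, assume the statement holds up to representation complexity $r-1$. Let $\sety^\star$ be Bayes-optimal with $\RT(\sety^\star)\le r$; by Theorem~\ref{th:rc:partition}, $\sety^\star$ admits a minimal disjoint decomposition $\{v_1^\star,\ldots,v_{r'}^\star\}\subseteq\calVT$ for some $r'\le r$. Fix the node $v_{r'}^\star$ and set $\sety' = \bigcup_{i<r'} v_i^\star$, so $\RT(\sety')\le r'-1$. By the induction hypothesis applied inside the recursion, the algorithm reaches a recursive call with the partial solution equal to $\sety'$. The remaining task is to show that at that call the priority queue $\calQ_{\sety'}$ either already contains $v_{r'}^\star$ or will contain it after further pops from ancestors of $v_{r'}^\star$; once popped, $\sety'\cup v_{r'}^\star=\sety^\star$ is evaluated at line~3 and will update the incumbent. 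Disjointness of the decomposition together with Proposition~\ref{prop:svbop-hf:validsol} guarantees that no conflicting ancestor/descendant of $v_{r'}^\star$ blocks this event.

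The main obstacle I expect is the bookkeeping around the priority queues passed between recursive calls: I need to verify that copying the queue preserves the invariant ``every node of $\calVT$ disjoint from the current partial solution $\sety'$ is reachable via a finite sequence of pops along a root-to-$v$ path'', even after pops and insertions have occurred higher up in the recursion. A clean way to phrase this is as a second invariant, proved together with the induction: for every partial solution $\sety'$ visited by the algorithm, the set of nodes that are either in $\calQ_{\sety'}$ or are descendants of nodes in $\calQ_{\sety'}$ is exactly $\{v\in\calVT : v\cap\sety'=\emptyset\}$. Once this invariant is established (by induction on the depth of the recursion, using that priority ordering does not affect reachability, only the order of visits), the argument above goes through and Bayes-optimality follows.
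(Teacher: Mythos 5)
There is a genuine gap: your argument treats Alg.~\ref{alg:svbop-hf-find} as a complete (priority-guided) enumeration of all feasible candidates, but the algorithm deliberately is not one. It breaks out of the while loop (i) at the deepest recursion level ($r'=1$) immediately after the first feasible candidate is evaluated, and (ii) at \emph{any} level as soon as a leaf node is popped. Because of these breaks, your base-case claim that the queue ``eventually pops every $v\in\calVT$ satisfying $|v|\le k$'' is simply not what the algorithm does, and in the inductive step the node $v_{r'}^\star$ may never be popped and the recursive call with partial solution $\sety'$ may never be reached; indeed the specific optimal set $\sety^\star$ you fix need not be visited at all. What has to be proved---and this is the heart of the paper's proof---is a \emph{dominance} argument justifying the cut-offs: whenever the search is truncated, every solution reachable only through the truncated branch is matched in probability mass by a solution the algorithm does visit. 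Concretely, if a leaf $v_l$ is popped with current partial solution $\sety$ and a later-popped node $v_n$ would have led to $\sety\cup v_n\cup\setv$, then disjointness (the Prop.~\ref{prop:svbop-hf:validsol}-style reasoning) gives $v_l\cap\setv=\emptyset$, so $\sety\cup v_l\cup\setv$ is visited, and the priority order gives $P(\sety\cup v_l\,|\,\bx)\ge P(\sety\cup v_n\,|\,\bx)$, hence $P(\sety\cup v_l\cup\setv\,|\,\bx)\ge P(\sety\cup v_n\cup\setv\,|\,\bx)$; an analogous (simpler) ordering argument justifies the break at $r'=1$. None of this appears in your proposal, so the steps that actually distinguish the algorithm from brute-force enumeration are left unjustified.

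A secondary problem is that your proposed invariant---``for every partial solution $\sety'$, the nodes in $\calQ_{\sety'}$ together with their descendants are exactly $\{v\in\calVT: v\cap\sety'=\emptyset\}$''---is false as stated: nodes that were popped \emph{earlier at the same level} (e.g., a sibling popped before the node generating $\sety'$) are neither in the copied queue nor descendants of queue elements, yet they are disjoint from $\sety'$. This particular issue is repairable by ordering the disjoint decomposition of a target set according to pop priority (so that each set of representation complexity at most $r$ is visited along at least one branch), which is essentially what the paper's opening sentence (``without lines 4--6, 8, 11--13 and 17--18 the algorithm visits all sets in $\calRT^{(1)}\cup\ldots\cup\calRT^{(r)}$'') takes for granted. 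The missing dominance argument for the break statements, however, is not a bookkeeping detail but the core of the theorem, so the proposal as written does not establish the result.
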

\begin{proof}
Without lines 4--6,8,11--13 and 17--18, it is clear that Alg.~\ref{alg:svbop-hf-find} will visit all sets in $\calRT^{(1)}\cup\ldots\cup\calRT^{(r)}$. The check in line 4 makes sure that only sets that satisfy the size constraint are 1) compared with the best solution so far and 2) are considered as current solution for a next recursive call in line 10. Moreover, with respect to the latter, if for the current solution we have that $|\sety|=k$, then we are not allowed to include additional nodes, hence, the additional check in line 8. Furthermore, we can return to the parent call in line 12 since the maximum level (i.e., representation complexity $r$) is reached and for any subsequent node $v'$ that is popped from $\calQ_{\sety}$ we know that $P(\sety\,\cup\,v'\,|\,\bx)\leq P(\sety'\,|\,\bx)$. Finally, for any level of the recursion, we can also return to the parent call in line 18 from the moment that we pop a leaf node. Indeed, assume that for a given level of recursion and iteration in the while loop of Alg.~\ref{alg:svbop-hf-find}, the first leaf node $v_{l}$ is popped from $\calQ_{\sety}$, resulting in a new candidate solution $\sety_{l}=\sety \cup v_{l}$. Let us denote by $v_{n}$ and $\sety_{n}=\sety \cup v_{n}$ any subsequent node that is popped from $\calQ_{\sety}$ and corresponding candidate solution. We have to show that there is no solution containing $\sety_{n}$ having a strictly higher probability mass than all solutions containing $\sety_{l}$. Let's assume that there exists a solution that satisfies the above, which we denote by $\sety_{n}'=\sety_{n}\cup \setv$ with $\setv \subset \calVT$. With a similar reasoning as in Prop.~\ref{prop:svbop-hf:validsol}, we know that $v_{l}\cap\setv=\emptyset$, and hence, the solution $\sety_{l}'=\sety_{l}\cup\setv$ must also be visited by Alg.~\ref{alg:svbop-hf-find}. Taking into account the property of the priority queue we know that:
\begin{equation*}
P(\sety_{l}\,|\,\bx)\geq P(\sety_{n}\,|\,\bx) \Leftrightarrow P(\sety_{l}'\,|\,\bx)\geq P(\sety_{n}'\,|\,\bx)\,,
\end{equation*}
which is in contrast with the above, and therefore, completes the proof by contradiction.
\end{proof}
\begin{algorithm}[h!]
\begin{small}
\caption{RTS -- \textbf{input:} $\bx$, $r$, $k$, $P$, $\calVT$}
\begin{algorithmic}[1] 
\State $\calQ = \emptyset$
\State $\calQ\mathrm{.add}((v_{1}, 1))$ 
\State $\bopset, p_{\bopset} \leftarrow $ RTS.find($\bx$, $r$, $k$, $\emptyset$, $0$, $\emptyset$, $0$, $\calQ$, $P$, $\calVT$)
\State \textbf{return} $\bopset, p_{\bopset}$
\end{algorithmic}
\label{alg:svbop-hf}
\end{small}
\end{algorithm}
\begin{algorithm}[h!]
\begin{small}
\caption{RTS.find -- \textbf{input:} $\bx$, $r'$, $k$, $\bopset$, $p_{\bopset}$, $\sety$, $p_{\sety}$, $\calQ_{\sety}$, $P$, $\calVT$}
\begin{algorithmic}[1] 
\While{$\calQ_{\sety} \neq \emptyset$}
    \State $(v, p_v) \leftarrow \calQ_{\sety}$.pop()
    \State $\sety', p_{\sety'} \leftarrow \sety \cup v, p_{\sety}+p_{v}$
    \If{$|\sety'|\leq k$}
        \If{$p_{\sety'} \geq p_{\bopset}$}
            \State $\bopset, p_{\bopset} \leftarrow \sety', p_{\sety'}$
        \EndIf
        \If{$r'>1$}
            \If{$|\sety'|\neq k$}
                \State $\calQ_{\sety'}\leftarrow\calQ_{\sety}$\Comment{{\scriptsize Copy priority queue}}
                \State $\bopset, p_{\bopset} \leftarrow$ RTS.find($\bx$, $r'-1$, $k$, $\bopset$, $p_{\bopset}$, $\sety'$, $p_{\sety'}$, $\calQ_{\sety'}$, $P$, $\calVT$)
            \EndIf
        \Else 
            \State \textbf{break}
        \EndIf
    \EndIf 
    \If{$v$ is not a leaf node}
	    \For{$v' \in $ Children($v$)}
	        \State $p_{v'} \leftarrow p_v \times P(v'\,|\,v,\bx)$
		    \State $\calQ_{\sety}$.add(($v'$,$P(v'\,|\,\bx$)))
	    \EndFor
	\Else
	    \State \textbf{break}
	\EndIf
\EndWhile 
    \State \textbf{return} $\bopset, p_{\bopset}$
\end{algorithmic}
\label{alg:svbop-hf-find}
\end{small}
\end{algorithm}
Taking into account the stopping criterion in line 18 of Alg.~\ref{alg:svbop-hf-find} that is proven in Theorem~\ref{alg:svbop-hf-find}, while assuming a complete binary tree with depth $\log_{2} K$ as hierarchy $\calT$, an upper bound on the time complexity of Alg.~\ref{alg:svbop-hf} is therefore given by $O(\log_{2} K^{r})$. 

\section{Related work}
\label{sec:relatedwork}

In flat multi-class classification, similar inference problems as problem (\ref{eq:bayesoptimal}) are considered, but without any restrictions on the representation complexity. This setting is simply referred to as top-$k$ prediction, and very popular in applied papers, e.g., papers that report the recall@$k$. A few other authors who study top-$k$ prediction in a more fundamental way prove that the top-$k$ can simply be found by the $k$ classes with the highest conditional class probabilities~\citep{Lapin16TopK,Chzhen21SetvaluedC}. Top-$k$ predictions are also frequently used in the context of extreme multi-label classification, where the number of labels is very large~\citep{Prabhu14FastXML,Babbar17DiSMEC,Prabhu18Parabel,Wydmuch18GenHSoftmaxXMLC,Zhuo20OptTreeModBeamSearch,Chang20PreTransformersXMLC}.   

Authors such as \citet{Chzhen21SetvaluedC} refer to top-$k$ prediction as pointwise size control. They also discuss many other set-valued prediction settings, including average size control~\citep{Denis17AvgSize}, average error control (such as conformal prediction)~\citep{Sadinle19AvgErrorb,Lei14AvgErrora,Shafer2008} and pointwise error control~\citep{Cai14PErrora,Lei14PErrorb,Vovk12PErrorc}. Another set-valued prediction framework for flat multi-class classification is rooted in Bayesian decision theory and optimizes a utility function that trades off the two important criteria for set-valued predictions, namely correctness and precision~\citep{Delcoz2009LearningNC,Corani2008NCC,Corani2009LNCC,Zaffalon2012EvaluatingCC,Yang2017b,Mortier21EfficientSVP}.

Set-valued prediction has also been considered in hierarchical multi-class classification. Here, too, various frameworks exist, which typically differ in the type of loss function that is considered, and in the flexibility in representation complexity that is allowed. Many papers restrict the representation complexity of the predicted set to one, using abstention strategies for classifiers in internal nodes of the hierarchy \citep{Freitas07HC,Rangwala17LHC,Yang17CautiousHMC}. For example, \citet{Sun01HierarchicalTC} propose a simple stopping strategy based on thresholding. When the probability mass for a given node is greater than a predefined threshold, the sample is iteratively sent to its children. \citet{Wang17LBRM} introduced a reject option by considering two specific local risk minimization problems in each node of the hierarchy. By starting at the root node, the tree is recursively traversed until an internal or leaf node is returned as prediction. 

In hierarchical classification, many authors have considered the optimization of hierarchical loss functions, which evaluate the hierarchical distance between the predicted node and the ground truth node -- see \citep{Bi15BOPHMLC} for an overview. Those approaches also return a single node of the hierarchy as prediction, so they restrict the representation complexity to 1 as well. An exception worth mentioning is \citet{Oh17TopKHC}, where the so-called top-$k$ hierarchical loss is introduced, which extends the hierarchical loss function proposed by~\citet{Cesa04IncHC} to the top-$k$ setting. This method has no constraint on the representation complexity. Similarly, \citet{Mortier21EfficientSVP} also consider a factorization like Eqn.~(\ref{eq:hierfac}) without any constraint on the representation complexity, but here set-based utility functions are optimzed. \citet{Yang17CautiousHMC} also evaluate different set-based utility functions in a framework where hierarchies are considered for computational reasons. 


Finally, due to the reduction in (\ref{eq:bayesooptimal:kcg}), our problem could be reduced to the knapsack problem with conflict graph. There are also some correspondences with the maximum independent set problem and the maximum vertex weight clique problem~\citep{Bettinelli17BNBForKCG,Gurski19KCFG,Pferschy17ApproxKCFG,Vassilevska09EffALgClique,Wang16MVWCP}. Those problems have been extensively studied in the literature, and depending on the problem statement, different algorithms have been proposed. Generally speaking, the knapsack problem is an NP-hard problem class in combinatorial optimization. However, exact and approximate pseudo-polynomial algorithms, based on dynamic programming and branch-and-bound, exist for special cases of conflict graphs, such as co-graphs or graphs with bounded clique width~\citep{Gurski19KCFG,Pferschy17ApproxKCFG,Bettinelli17BNBForKCG}. However, in addition to the structure of our conflict graph, it is not immediately clear whether our problem statement allows a dynamic programming solution, since an additional constraint on the representation complexity is considered in problem (\ref{eq:bayesoptimal}). This additional constraint is atypical for classical KCG problems. Therefore, a more thorough analysis on the structure of the conflict graph in problem (\ref{eq:bayesoptimal}) and a translation to more efficient algorithms appear to be interesting problems for future work. 

\section{Experiments}\label{sec:experiments}

\begin{table}[t]
\centering
\vskip 0.1in
\caption{\small Overview of of image (top) and text (bottom) datasets used in the experiments. Notation: $K$ -- number of classes, $D$ -- number of features, $N$ -- number of samples}
\label{tab:exp:datasets}
\resizebox{\columnwidth}{!}{%
\begin{tabular}{lrrrr}
\toprule
    \textbf{Dataset} & $\mathbf{K}$ & $\mathbf{D}$ & $\mathbf{N_{train}}$ & $\mathbf{N_{test}}$ \\
    \midrule
    \textbf{Caltech-101}~\citep{Li03Caltech101} & 97 & 1000 & 4338 & 4339 \\
    \textbf{Caltech-256}~\citep{Griffin07Caltech256} & 256 & 1000 & 14890 & 14890 \\
    \textbf{PlantCLEF2015}~\citep{Goeau15PlantClef} & 1000 & 1000 & 91758 & 21447 \\
    \midrule
    \textbf{Bacteria}~\citep{RIKEN13Bacteria} & 2659 & 1000 & 10587 & 2294 \\
    \textbf{Proteins}~\citep{Li18DEEPre} & 3485 & 1000 & 11830 & 10179 \\
    \bottomrule%
\end{tabular}%
}
\end{table}

\begin{figure}[t]
\centering
\begin{subfigure}[t]{\columnwidth}
  \centering
  \includegraphics[width=0.45\columnwidth]{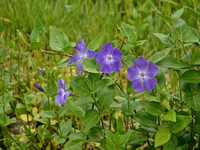}
    \hfill
  \includegraphics[width=0.45\columnwidth]{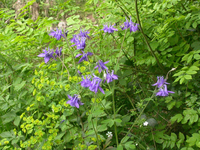}
    \caption{\scriptsize$\{\text{\emph{Aquilegia vulgaris L.}}\}_{1,2}\cup\{\text{\underline{\emph{Vinca major L.}}}, \text{\emph{Vinca minor L.}}\}_{2}$}
  \label{fig:ill:fl1}
\end{subfigure}%
\\
    \vspace{0.8mm}
\begin{subfigure}[t]{\columnwidth}
  \centering
  \includegraphics[width=0.45\columnwidth]{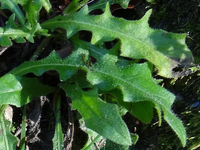}
  \hfill
  \includegraphics[width=0.45\columnwidth]{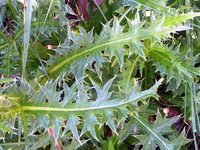}
    \caption{\scriptsize$\{\text{\emph{Carduus defloratus L.}}\}_{1,2,3}\cup\{\text{\emph{Carduus nigrescens Vill.}}\}_{2,3}\cup\{\text{\underline{\emph{Leontodon hispidus L.}}}\}_{3}$}
  \label{fig:ill:fl2}
\end{subfigure}%
    \caption{Left: image of \emph{Vinca major L.} (top) and \emph{Leontodon hispidus L.} (bottom) from PlantCLEF2015 with corresponding predictions. Set sizes were restricted to five and for each example, different representation complexities were considered. Notation: $\{\ldots\}_{i,j} := $ set that is predicted when restricting the representation complexity by $i$ and $j$. Right: image of corresponding top-1 prediction, in this case \emph{Aquilegia vulgaris L.} (top) and \emph{Carduus defloratus L.} (bottom).}
\label{fig:ill}
\end{figure}
\begin{table*}[t!]
      \centering
      \vskip 0.1in
    \caption{\small Performance versus runtime for MVM, TOP-$k$, KCG, RTS and SVBOP-HF on five benchmark datasets. For all models, we consider different restrictions for the representation complexity $r$ and set size $k$. Notation:  $t_{\text{train}}$ -- CPU training time in seconds per training instance, Acc. -- test accuracy for underlying probabilistic model, $t_{\text{test}}$ -- CPU top-1 prediction time in seconds per test instance, R -- avg. recall on test set, $|\sety|$ -- avg. prediction size on test set, $t$ -- CPU prediction time in seconds per test instance, $n$ -- complexity per test instance (see main paper for more information).}
    \label{tab:exp}
      \resizebox{\textwidth}{!}{%
      \begin{tabular}{l|l|c|cc|cccc|cccc}
      \toprule
          \textsc{Dataset} & \textsc{Model-$r$} & $t_{\text{train}}$ & \textsc{Acc.} & $t_{\text{test}}$ & $R$ & $|\sety|$ & $t$ & $n$ & $R$ & $|\sety|$ & $t$ & $n$\\
          & & & & & \multicolumn{4}{c|}{$k=5$} & \multicolumn{4}{c}{$k=10$}  \\
      \midrule
          \multirow{12}{*}{\textsc{Caltech-101}}& MVM-1 & \multirow{4}{*}{0.0013} & \multirow{4}{*}{0.8993} & \multirow{4}{*}{0.0006} & 0.9215 & 2.3713 & 0.0020 & 117 & 0.9303 & 4.4690 & 0.0020 & 122 \\
          & MVM-2 & & & & 0.9602 & 3.1725 & 0.0103 & 6359  & 0.9669 & 5.3850 & 0.0114 & 7245 \\
          & MVM-3 & & & & 0.9734 & 3.9037 & 0.2892 & 222711 & 0.9780 & 6.5212 & 0.3587 & 278537 \\
          & TOP-$k$ & & & & 0.9831 & 5.0000 & 0.0007 & - & 0.9926 & 10.0000 & 0.0008 & - \\
      \cmidrule{2-13}
          & KCG-1 & \multirow{4}{*}{0.0013} & \multirow{4}{*}{0.8919} & \multirow{4}{*}{0.0006} & 0.9113 & 2.5674 & 0.0053 & \multirow{4}{*}{$425\times128$} & 0.9183 & 5.4093 & 0.0053 & \multirow{4}{*}{$425\times128$} \\
          & KCG-2 & & & & 0.9558 & 3.1931 & 0.0056 &  & 0.9623 & 5.9845 & 0.0056 & \\
          & KCG-3 & & & & 0.9729 & 3.8329 & 0.0053 & & 0.9764 & 7.0317 & 0.0057 &  \\
          & KCG-$\infty$ & & & & 0.9838 & 4.4481 & 0.0053 & & 0.9931 & 8.9662 & 0.0057 &  \\
      \cmidrule{2-13}
          & RTS-1 & \multirow{4}{*}{0.0022} & \multirow{4}{*}{0.8898} & \multirow{4}{*}{0.0007} & 0.9076 & 2.5550 & 0.0008 & 3.9090 & 0.9100 & 5.4079 & 0.0007 & 3.4421\\
          & RTS-2 & & & & 0.9468 & 3.4234 & 0.0009 & 8.3639 & 0.9579 & 6.1926 & 0.0009 & 9.1150 \\
          & RTS-3 & & & & 0.9609 & 4.1470 & 0.0010 & 12.7731 & 0.9706 & 7.5350 & 0.0011 & 15.5488 \\
          & SVBOP-HF & & & & 0.9729 & 5.0000 & 0.0010 & - & 0.9885 & 10.0000 & 0.0011 & -  \\
      \midrule
          \multirow{12}{*}{\textsc{Caltech-256}} & MVM-1 & \multirow{4}{*}{0.0013} & \multirow{4}{*}{0.7581} & \multirow{4}{*}{0.0006} & 0.7705 & 1.8499 & 0.0043 & 284 & 0.8016 & 5.1747 & 0.0043 & 303 \\
          & MVM-2 & & & & 0.8569 & 3.1443 & 0.0550 & 39602 & 0.8774 & 6.9616 & 0.0627 & 44917 \\
          & MVM-3 & & & & 0.8882 & 3.8196 & 4.2796 & 3385995 & 0.9040 & 7.9964 & 5.4916 & 4301775 \\
          & TOP-$k$ & & & & 0.9196 & 5.0000 & 0.0007 & - & 0.9515 & 10.0000 & 0.0007 & - \\
      \cmidrule{2-13}
          & KCG-1 & \multirow{4}{*}{0.0012} & \multirow{4}{*}{0.7625} & \multirow{4}{*}{0.0006} & 0.7747 & 1.8688 & 0.0082 & \multirow{4}{*}{$1140\times318$} & 0.8034 & 5.1944 & 0.0083 & \multirow{4}{*}{$1140\times318$} \\
          & KCG-2 & & & & 0.8611 & 3.1744 & 0.0082 &  & 0.8789 & 6.9935 & 0.0086 & \\
          & KCG-3 & & & & 0.8918 & 3.8339 & 0.0085 & & 0.9077 & 7.9933 & 0.0088 &  \\
          & KCG-$\infty$ & & & & 0.9214 & 4.9950 & 0.0081 & & 0.9519 & 9.9709 & 0.0087 &  \\
      \cmidrule{2-13}
          & RTS-1 & \multirow{4}{*}{0.0023} & \multirow{4}{*}{0.6640} & \multirow{4}{*}{0.0008} & 0.6955 & 1.8809 & 0.0008 & 4.6238 & 0.7181 & 5.2998 & 0.0008 & 4.0122\\
          & RTS-2 & & & & 0.7832 & 3.1265 & 0.0009 & 7.9226 & 0.8087 & 7.1283 & 0.0009 & 9.0010 \\
          & RTS-3 & & & & 0.8192 & 3.8171 & 0.0010 & 11.2637 & 0.8445 & 8.1210 & 0.0011 & 15.1166 \\
          & SVBOP-HF & & & & 0.8576 & 5.0000 & 0.0010 & - & 0.9079 & 10.0000 & 0.0012 & -  \\
      \midrule
          \multirow{10}{*}{\textsc{PlantCLEF2015}} & MVM-1 & \multirow{2}{*}{0.0013} & \multirow{2}{*}{0.4938} & \multirow{2}{*}{0.0006} & 0.5220 & 2.0595 & 0.0149 & 1571 & 0.5536 & 3.9500 & 0.0148 & 1613 \\
          & TOP-$k$ & & & & 0.7239 & 5.0000 & 0.0007 & - & 0.7969 & 10.0000 & 0.0007 & - \\
      \cmidrule{2-13}
          & KCG-1 & \multirow{4}{*}{0.0012} & \multirow{4}{*}{0.4949} & \multirow{4}{*}{0.0006} & 0.5236 & 2.1305 & 0.0708 & \multirow{4}{*}{$4158\times1641$} & 0.5547 & 4.1527 & 0.0707 & \multirow{4}{*}{$4158\times1641$} \\
          & KCG-2 & & & & 0.6226 & 3.2944 & 0.0716 &  & 0.6538 & 6.1007 & 0.0725 & \\
          & KCG-3 & & & & 0.6690 & 3.9379 & 0.0746 & & 0.7003 & 7.2684 & 0.0755 &  \\
          & KCG-$\infty$ & & & & 0.7187 & 4.9743 & 0.0752 & & 0.7923 & 9.9064 & 0.0789 &  \\
      \cmidrule{2-13}
          & RTS-1 & \multirow{4}{*}{0.0033} & \multirow{4}{*}{0.4278} & \multirow{4}{*}{0.0007} & 0.4645 & 2.1577 & 0.0009 & 3.1423 & 0.5004 & 4.2118 & 0.0009 & 2.7745\\
          & RTS-2 & & & & 0.5642 & 3.3311 & 0.0011 & 6.5725 & 0.6001 & 6.1405 & 0.0010 & 6.8671 \\
          & RTS-3 & & & & 0.6099 & 4.0115 & 0.0012 & 10.3591 & 0.6432 & 7.3037 & 0.0012 & 12.3894 \\
          & SVBOP-HF & & & & 0.6626 & 5.0000 & 0.0011 & - & 0.7433 & 10.0000 & 0.0013 & -  \\
      \midrule
          \multirow{10}{*}{\textsc{Bacteria}} & MVM-1 & \multirow{2}{*}{0.0001} & \multirow{2}{*}{0.5704} & \multirow{2}{*}{0.0000} & 0.6215 & 2.0788 & 0.0377 & 3994 & 0.6976 & 4.2610 & 0.0380 & 4096 \\
          & TOP-$k$ & & & & 0.8063 & 5.0000 & 0.0001 & - & 0.8675 & 10.0000 & 0.0002 & - \\
      \cmidrule{2-13}
          & KCG-1 & \multirow{4}{*}{0.0001} & \multirow{4}{*}{0.5929} & \multirow{4}{*}{0.0000} & 0.6369 & 1.9423 & 1.0533 & \multirow{4}{*}{$29556\times4330$} & 0.7038 & 4.1752 & 1.0542 & \multirow{4}{*}{$29556\times4330$} \\
          & KCG-2 & & & & 0.7205 & 3.4833 & 1.0611 &  & 0.7812 & 5.7421 & 1.0646 & \\
          & KCG-3 & & & & 0.7606 & 4.1175 & 1.0606 & & 0.8081 & 7.3407 & 1.0855 &  \\
          & KCG-$\infty$ & & & & 0.7931 & 5.0000 & 1.0672 & & 0.8741 & 10.0000 & 1.0918 &  \\
      \cmidrule{2-13}
          & RTS-1 & \multirow{4}{*}{0.0030} & \multirow{4}{*}{0.8006} & \multirow{4}{*}{0.0003} & 0.8398 & 1.7742 & 0.0005 & 7.9489 & 0.8913 & 3.7601 & 0.0005 & 7.5678\\
          & RTS-2 & & & & 0.9353 & 3.1959 & 0.0005 & 10.5867 & 0.9516 & 5.5599 & 0.0006 & 11.1743 \\
       & RTS-3 & & & & 0.9608 & 3.8191 & 0.0006 & 13.4784 & 0.9705 & 6.6738 & 0.0006 & 15.7210 \\
       & SVBOP-HF & & & & 0.9802 & 5.0000 & 0.0006 & - & 0.9952 & 10.0000 & 0.0007 & -  \\
      \midrule
      \multirow{10}{*}{\textsc{Proteins}} & MVM-1 & \multirow{2}{*}{0.0000} & \multirow{2}{*}{0.7699} & \multirow{2}{*}{0.0000} & 0.7766 & 1.3152 & 0.0489 & 3626 & 0.7829 & 2.2505 & 0.0500 & 3672 \\
      & TOP-$k$ & & & & 0.9009 & 5.0000 & 0.0001 & - & 0.9235 & 10.0000 & 0.0002 & - \\
      \cmidrule{2-13}
      & KCG-1 & \multirow{4}{*}{0.0000} & \multirow{4}{*}{0.7667} & \multirow{4}{*}{0.0000} & 0.7728 & 1.3245 & 0.4748 & \multirow{4}{*}{$14784\times3792$} & 0.7802 & 2.3300 & 0.4739 & \multirow{4}{*}{$14784\times3792$} \\
      & KCG-2 & & & & 0.8439 & 2.3042 & 0.4758 &  & 0.8494 & 4.2730 & 0.4751 & \\
      & KCG-3 & & & & 0.8734 & 3.2057 & 0.4837 & & 0.8765 & 5.8075 & 0.4861 &  \\
      & KCG-$\infty$ & & & & 0.9003 & 4.9320 & 0.4888 & & 0.9219 & 9.8309 & 0.4906 &  \\
      \cmidrule{2-13}
      & RTS-1 & \multirow{4}{*}{0.0016} & \multirow{4}{*}{0.7806} & \multirow{4}{*}{0.0002} & 0.7936 & 1.3045 & 0.0004 & 5.0570 & 0.8012 & 2.2052 & 0.0003 & 4.8834 \\
      & RTS-2 & & & & 0.8610 & 2.3161 & 0.0004 & 7.2716 & 0.8664 & 3.6366 & 0.0005 & 7.7215 \\
      & RTS-3 & & & & 0.8842 & 3.2457 & 0.0005 & 9.0939 & 0.8885 & 4.7484 & 0.0006 & 10.9509 \\
      & SVBOP-HF & & & & 0.9086 & 5.0000 & 0.0005 & - & 0.9308 & 10.0000 & 0.0007 & -  \\
      \bottomrule
      \end{tabular}
      }
\end{table*}

We perform two types of experiments. In a first experiment, we illustrate the usefulness of restricting the representation complexity on a fine-grained visual categorization dataset. In a second experiment, we compare the different algorithms that we propose with some baselines, by looking at predictive performance and runtime efficiency for five different benchmark datasets. Summary statistics related to the datasets can be found in Table~\ref{tab:exp:datasets}. For all datasets, we use a predefined hierarchy that was provided with the data. For detailed information, related to the experimental setup, we refer the reader to App.~\ref{sec:app:expsetup}. 

\subsection{Some Illustrations}
We illustrate the usefulness of our framework on the PlantCLEF2015 dataset. This is a well-known image dataset with fine-grained annotations of $1000$ plant species. The dataset is characterized by a substantial class ambiguity, making accurate predictions on the species level often impossible.  In Fig.~\ref{fig:ill}, we show two images (left) and the predictions for the labels. Additionally, we also show images of corresponding top-1 predictions (right). The subscript $i$ means that the subset belongs to the prediction obtained by restricting the representation complexity by $i$. For the top image, an example of the \emph{Vinca major L.} class, we show two predictions obtained by restricting the representation complexity to one and two, respectively. If the representation complexity is two, then the ground truth class is included in the solution. Class ambiguity is present at a higher level in the plant species hierarchy, since both the genera \emph{Aquilegia} and \emph{Vinca} contain plants with similar flowers, which are difficult to distinguish from each other, as can be observed by comparing the left with the right image. In this case, predicting a single node from the hierarchy (i.e., by restricting the representation complexity to one) would not be sufficient, given the restriction on the set size. For the bottom image, an example of class \emph{Leontodon hispidus L.}, we even have a higher degree of ambiguity, which is illustrated by the fact that we need a representation complexity of three for the ground truth to be included in the predicted set. 

\subsection{Benchmarking Results}
In a second set of experiments, with results shown in Table~\ref{tab:exp}, we analyse the performance versus runtime for MVM, KCG and RTS on the five benchmark datasets. In addition, we also include results for two baselines from literature: (i) the pointwise size control framework, as described by~\citet{Chzhen21SetvaluedC}, which corresponds to top-$k$ prediction by using a flat probabilistic model (TOP-$k$), and (ii) SVBOP-HF, an exact inference algorithm that was proposed by~\citet{Mortier21EfficientSVP} for top-$k$ prediction by using a probabilistic model with hierarchical factorization. Note that the latter baselines are only applicable when we don't have a restriction on the representation complexity (i.e., $r=\infty$ in Table~\ref{tab:exp}). More precisely, for SVBOP-HF and RTS, we use a hierarchical softmax layer, as given by Eqn.~(\ref{eq:hierfac}), whereas for MVM, TOP-$k$ and KCG, we use a (flat) softmax layer for the probabilistic model. In a first step, we train and validate our probabilistic model on the training set. Finally, in a last inference step, we use our trained probabilistic model to obtain predictions on the test set. For KCG, we tested different mixed-integer solvers such as SCIP, CBC and a long-step dual simplex solver from the GLPK kit~\citep{Achterberg09SCIP,Forrest18CBC,Makhorin01GLPK}. However, we only mention the results for the GLPK solver, since for this solver the runtime was substantially lower for all experiments. 

For each experiment, we show the training time in seconds per instance $t_{\text{train}}$, the accuracy of the underlying probabilistic model Acc., time in seconds to obtain the top-1 prediction for a test instance $t_{\text{test}}$, average recall on test set $R$, average prediction size on test set $|\sety|$ and prediction time in seconds per test instance $t$. In addition, we also analyse the complexity for each test instance by means of a method-specific complexity metric $n$, which corresponds to the size of the feasible set $\calM_{r,k}$, dimensionality of the matrix $\bA_{\calT}$ and the number of nodes that are popped from the priority queue in line 3 of Alg.~\ref{alg:svbop-hf-find}. In terms of runtime efficiency, RTS significantly outperforms MVM and KCG for all datasets. This is also illustrated by looking at the complexity metrics. For the biological datasets, we only considered a representation complexity of 1 for MVM, since higher values for $r$ quickly gave rise to out-of-memory usage errors due to the size of the matrix $\bM$ increasing exponentially. In general, the improvement in runtime for RTS comes with a cost of lower performance of the underlying probabilistic model. Only for the non-visual biological datasets, there seems to be an improvement when a hierarchy is considered. Perhaps, this finding can be explained by the fact that taxonomic information is much more present in those datasets, compared to the image datasets. Finally, increasing the representation complexity generally results in a higher recall and set size, which once again illustrates its usefulness. In extremis, when the representation complexity is not restricted, the best performance is observed. However, in that case, the complexity of our prediction is also much higher, which is not really meaningful in case we want to restrict predictions to a predefined hierarchy.

\section{Conclusion}\label{sec:conclusion}

In this work, we proposed a new decision-theoretic framework for set-valued prediction in hierarchical classification by introducing the notion of representation complexity. This complexity allows the user to relax the often strong restriction that is implied by hierarchical classification, namely that predictions should correspond to single nodes of a predefined hierarchy. 

We proposed several algorithms that solve the challenging optimization problem in an exact way. One of those algorithms, based on a recursive tree search method that uses a hierarchical factorization of the conditional class distribution, shows especially promising results in terms of runtime complexity. 

An interesting future direction could be to generalize our framework to other settings that are commonly found in the set-valued prediction literature, such as pointwise and average control of the set size or error rate. Moreover, the translation of our problem to the well-known knapsack problem with conflict graph seems interesting and opens the potential to improve the runtime complexity of the recursive tree search method by exploiting the specific structure of our conflict graph. 

\bibliography{main}

\clearpage

\appendix

\section{Proof of Theorem~\ref{th:rc:partition}}\label{sec:app:proofth}
We first prove an intermediate result. 
\begin{proposition}
\label{prop:rc:disjoint}
For any class space $\calY$ and valid hierarchy $\calT$ we have that:
\begin{equation}
	\forall i,j \in [K-1]: \calRT^{(i)}\neq \calRT^{(j)} \implies \calRT^{(i)}\cap\calRT^{(j)} = \emptyset
\end{equation}
\end{proposition}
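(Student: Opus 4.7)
The plan is to prove the contrapositive: if $\calRT^{(i)}\cap\calRT^{(j)}\neq\emptyset$, then $\calRT^{(i)}=\calRT^{(j)}$. The key observation is that $\RT(\cdot)$ defined in Eqn.~(\ref{eq:c}) is a \emph{function} from $\powerset\setminus\{\emptyset\}$ to $\{1,\ldots,K-1\}$, in the sense that it assigns to every nonempty $\sety$ a single integer, namely $\min_{\setv\in\calST(\sety)}|\setv|$. The class $\calRT^{(r)}$ is simply the preimage of $\{r\}$ under this function, so the proposition reduces to the familiar statement that preimages of disjoint singletons are disjoint. Concretely, if $\sety\in\calRT^{(i)}\cap\calRT^{(j)}$, then $i=\RT(\sety)=j$, hence $\calRT^{(i)}=\calRT^{(j)}$, contradicting the hypothesis $\calRT^{(i)}\neq\calRT^{(j)}$.

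The only item that requires a moment of justification is that $\RT(\sety)$ is well-defined, i.e., that the minimum in Eqn.~(\ref{eq:c}) is taken over a nonempty set. Since the leaves of $\calT$ correspond to the individual classes, the collection of singletons $\{\{c\}:c\in\sety\}$ is always an element of $\calST(\sety)$ whenever $\sety\neq\emptyset$. Thus $\calST(\sety)$ is a nonempty finite family, and its minimum cardinality is a single well-defined integer bounded above by $|\sety|\leq K-1$ (after we remove the empty set, whose representation complexity is handled separately in Theorem~\ref{th:rc:partition}).

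The hard part is essentially absent. The whole content of the proposition is that an integer-valued function cannot take two different values at the same input, so the proof I would write consists of a two-line contrapositive argument together with the well-definedness remark above. The proposition is an intermediate lemma whose real purpose is to service the proof of Theorem~\ref{th:rc:partition}, where the genuinely nontrivial task is to show that every nonempty subset of $\calY$ has representation complexity at most $K-1$, i.e., that $\{\calRT^{(1)},\ldots,\calRT^{(K-1)}\}$ actually \emph{covers} $\powerset\setminus\{\emptyset\}$; disjointness on its own is essentially a matter of definition chasing.
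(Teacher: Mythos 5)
Your proposal is correct and takes essentially the same route as the paper: the paper also argues that any $\sety$ in the intersection would have to satisfy $\RT(\sety)=i$ and $\RT(\sety)=j$ simultaneously, forcing $i=j$, which is exactly your observation that $\RT$ is single-valued (the paper phrases it as a contradiction, you as a contrapositive). Your added remark on well-definedness of the minimum is a harmless supplement; the paper defers that point (nonemptiness of $\calST(\sety)$) to the coverage part of the proof of Theorem~\ref{th:rc:partition}.
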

\begin{proof}
Let $i,j$ with $\calRT^{(i)}\neq \calRT^{(j)}$ and assume that $\calRT^{(i)}\cap\calRT^{(j)} \neq \emptyset$. For $\hat{Y}\in\calRT^{(i)}\cap\calRT^{(j)}$, we know that:
\begin{align*}
\RT(\hat{Y})=i &\Leftrightarrow \min_{\setv\in \calST(\hat{Y})} |\setv| = i\,,\\
\RT(\hat{Y})=j &\Leftrightarrow \min_{\setv\in \calST(\hat{Y})} |\setv| = j\,,
\end{align*}
and, hence, is only possible when $i=j$, which contradicts with the beginning of this proof.
\end{proof}
In order to prove Theorem~\ref{th:rc:partition}, we need to show that the following conditions are met:
\begin{enumerate}
\item $\forall i,j \in [K-1]: \calRT^{(i)}\neq \calRT^{(j)} \implies \calRT^{(i)}\cap\calRT^{(j)} = \emptyset$
\item $\bigcup_{i\in [K-1]}\calRT^{(i)}=\powerset$
\end{enumerate}
The first condition is met due to Proposition~\ref{prop:rc:disjoint}. To show that the second condition is met, we need to prove that $\sety \in \bigcup_{i\in [K-1]}\calRT^{(i)} \implies \sety\in \powerset \land \sety\in \powerset \implies \sety \in \bigcup_{i\in [K-1]}\calRT^{(i)}$. We start by proving the first part, which follows trivially from the definition of a representation complexity class, as each set that belongs to a given representation complexity class must be element of $\powerset$. To prove the second part, it suffices to show that $\forall\,\sety\in \powerset: \calST(\sety)\neq\emptyset$, or in other words, for each element $\sety$ in $\powerset$ there exists at least one $\setv\subset \calVT$ such that:
\begin{equation*}
\bigcup_{v_{i}\in \setv} v_{i}=\sety\,,\quad\bigcap_{v_{i}\in\setv} v_{i}=\emptyset\,.
\end{equation*} 
Note that each element $\sety\in \powerset$ can be represented by either a node in the hierarchy, the union of sets of leaf nodes in the hierarchy $\calT$:
\begin{equation*}
\sety = \bigcup_{c_{i}\in\sety} \{c_{i}\}\,,
\end{equation*}
or by a union of internal and/or leaf nodes. From this, it follows that $\calST(\sety)\neq\emptyset$ and $\RT(\sety)=\min_{\setv\in \calST(\sety)} |\setv|=i$, where $i$ is lower bounded by one and upper bounded by $|\sety|$. Therefore, given the above, it follows that $\forall\, \sety\in \powerset\,,\exists\,i\in[K-1]\,:\sety\in\calRT^{(i)}$ which proves the second and last part of this proof. 


\section{Experimental setup}\label{sec:app:expsetup}

We use a MobileNetV2 convolutional neural network~\citep{Sandler18mobilenetv2}, pretrained on ImageNet~\citep{Deng09ImageNet}, to obtain hidden representations for all image datasets. For the bacteria dataset, tf-idf representations are obtained by means of extracting 3-, 4- and 5-grams from the 16S rRNA sequences that were provided in the dataset~\citep{Fiannaca18Bacteria}. For the proteins dataset, tf-idf representations are obtained by considering 3-grams only. Furthermore, to comply with literature, the tf-idf representations are concatenated with functional domain encodings, which contain distinct functional and evolutional information about the protein sequence~\citep{Li18DEEPre}. Next, the obtained feature representations for the biological datasets are then passed through a single-layer neural net with 1000 output neurons and a ReLU activation function. We use the categorical cross-entropy loss by means of stochastic gradient descent with momentum, where the learning rate and momentum are set to $1e-5$ and 0.99, respectively. For the models without hierarchical factorization, we set the number of epochs to 2 and 20, for the Caltech and other datasets, respectively. For the models with hierarchical factorization, we use 4 and 30, respectively. We train all models end-to-end on a GPU, by using the PyTorch library~\citep{Paszke17pytorch} and infrastructure with the following specifications:
\begin{itemize}
    \item \textbf{CPU:} i7-6800K 3.4 GHz (3.8 GHz Turbo Boost) – 6 cores / 12 threads,
    \item \textbf{GPU:} 2x Nvidia GTX 1080 Ti 11GB + 1x Nvidia Tesla K40c 11GB,
    \item \textbf{RAM:} 64GB DDR4-2666.
\end{itemize}
Finally, we implemented the RTS and TOP-$k$ algorithms in C++ by using the PyTorch C++ API~\citep{Paszke17pytorch}.

\end{document}